\title[Learning in Time-varying Matching Markets]{Competing Bandits in Time Varying Matching Markets}
\author{%
 \Name{Deepan Muthirayan} \Email{deepan.m@uci.edu}\\
 \addr University of California, Irvine
 \AND
 \Name{Chinmay Maheshwari} \Email{chinmay\_maheshwari@berkeley.edu}\\
 \addr University of California, Berkeley%
 \AND
 \Name{Pramod P. Khargonekar} \Email{pramod.khargonekar@uci.edu}\\
 \addr University of California, Irvine%
 \AND
 \Name{Shankar Sastry} \Email{sastry@coe.berkeley.edu}\\
 \addr University of California, Berkeley%
}
\begin{document}

\maketitle

\begin{abstract}%

  We study the problem of online learning in two-sided non-stationary matching markets, where the objective is to converge to a stable match. In particular, we consider the setting where one side of the market, the arms, has fixed known set of preferences over the other side, the players. While this problem has been studied when the players have fixed but unknown preferences, in this work we study the problem of how to learn when the preferences of the players are time varying and unknown. Our contribution is a methodology that can handle any type of preference structure and variation scenario. We show that, with the proposed algorithm, each player receives a uniform sub-linear regret of {$\widetilde{\mathcal{O}}(L^{1/2}_TT^{1/2})$} up to the number of changes in the underlying preferences of the agents, $L_T$. Therefore, we show that the optimal rates for single-agent learning can be achieved in spite of the competition up to a difference of a constant factor. We also discuss extensions of this algorithm to the case where the number of changes need not be known a priori.
\end{abstract}

\begin{keywords}
  Matching Markets, Non-stationary multi-armed bandits.
\end{keywords}

\section{INTRODUCTION}

Matching market design has been influential and instrumental in many markets such as kidney markets, school exchange markets, labor markets etc. Matching markets are essentially two-sided markets where participants on either side of the market have preferences over the other side. The typical goal in a matching market is to match the players on one side to the players on the other side with desirable properties such as stability, maximality, efficiency, and strategy proofness etc. 
Several papers have been published exploring the various properties such as stability (\cite{gale1962college, knuth1997stable}), maximality (\cite{edmonds1965maximum}), strategy proofness (\cite{roth1982economics}), and several applications such as housing allocation \cite{roth1984evolution}, content delivery \cite{maggs2015algorithmic}, kidney exchanges \cite{roth2005pairwise}, control applications such as Transportation \citep[Ride Sharing Platforms]{lowalekar2018online}, Smart Grids \cite[Peer-to-Peer Markets]{muthirayan2020online}, etc. Thus, matching markets have significant relevance and importance.

In spite of the many advances in matching markets, majority of these studies ignore the fact that, in many practical circumstances, the participants of the markets may not know their underlying preferences. In addition, the participants may only be able to observe the outcome after a match and that too with high statistical uncertainty. Therefore, the participants will have to learn their preferences with the added complexity that their exploration can be hampered by the other agents who are competing over the same resources. Thus, in such circumstances, the learning will have to be coordinated in a way that all the participants can uncover their underlying preferences and the market can reach its desired equilibrium. 

\cite{liu2020competing} introduced the competing bandits framework to study the interaction of learning and competition in the context of two-sided online matching markets  settings. The competing bandits framework \cite{liu2020competing} is inspired by many modern matching markets where the markets are two-sided and there are multiple rounds of interactions. In this framework, in each round, the participants submit their preferences and the market is cleared by the submitted preferences. After the market is cleared or the matching is made, the participants get to observe the final outcome or the reward from their matching, which can be prone to statistical uncertainty. 
Thus, the feedback the participants get is like a bandit type feedback. The players need to learn their preferences over the other side of the market from just the bandit feedback, with the added complexity that what they get to explore is also impacted by the other participants. Thus, the competing bandits allows us to study the interaction of learning, where multiple participants are simultaneously learning and competing. The goal is to design an algorithm that can enable the participants to learn their underlying preferences and enable the market to reach a stable match in spite of the complex interplay of competition and learning.

{\bf Contribution}:
We study an extension of the competing bandits problem where the underlying unknown preferences of the participants can be time varying. 
While \cite{liu2020competing} study the competing bandits framework, they assume that the participants' preferences are fixed in time. This may not be realistic, given that the participants' preferences are not likely to be stationary. In this work, we present algorithms and guarantees for the scenario where the participants' preferences can be time variant. While \cite{ghosh2022decentralized} study the same problem for a serial dictatorship preference structure and smoothly varying preferences, {\it the methodology we propose can handle any type of preference structure and any type of variation, and is also very efficient at the same time}. This is our key contribution to this line of work. We show that, with our approach, each agent can incur a regret of {\(\widetilde{O}(L_T^{1/2}T^{1/2})\)}, where \(T\) is the number of rounds and \(L_T\) is the number of changes in the preference ordering over the \(T\) rounds across all the participants, matching the optimal regret rates for single-agent bandit learning \citep{auer2019adaptively}.

\section{RELATED WORKS}
This section discusses the literature on time-varying bandits and learning in matching markets.  

\paragraph{Time Varying Multi-armed Bandits.}
Non-stationary Multi-Armed Bandits (MAB) problem was first studied by \cite{auer2002finite}. They proposed a variant of the EXP3 algorithm called EXP3.S and showed that EXP3.S can achieve Minimax optimal regret $\mathcal{O}(\sqrt{KLT})$ up to logarithmic factors, where $K$ is the number of arms, $L$ is the number of abrupt changes and $T$ is the number of iterations. There were other studies such as \cite{garivier2011upper} and \cite{allesiardo2017non} that demonstrated that a sliding window approach or a restart approach can also achieve the Minimax optimal regret. Recently, \cite{auer2019adaptively} proposed a near-optimal regret algorithm for the same problem that does not need the knowledge of the number of changes. \cite{besbes2019optimal,krishnamurthy2021slowly} studied the non-stationary MAB with slowly changing variations, and showed that a regret of $\widetilde{O}\left(P^{1/3}_T T^{2/3}\right)$ can be achieved, where $P_T$ denotes the total variation. The non-stationary linear bandits problem was studied by \cite{cheung2019learning, russac2019weighted, zhao2020simple}. These works proposed approaches like sliding window least-squares, weighted UCB and restart strategy respectively and showed that a regret of $\widetilde{\mathcal{O}}\left(P^{1/4}_TT^{3/4}\right)$ can be achieved by all these approaches.


\paragraph{Learning in Two-sided Matching Markets.}
Learning in the context of two-sided matching markets is a new and active research area \citep{das2005two,liu2020competing,liu2021bandit,jagadeesan2021learning,maheshwari2022decentralized,basu2021beyond,cen2022regret,dai2021learning}. \cite{das2005two} was the first work which employed multi-armed bandit algorithms to learn preferences in matching markets. However, it was only recently that \cite{liu2020competing} formulated the problem more formally. One can classify the existing literature in this space into two classes. First is the setting where the true underlying preferences of the players over the firms are fixed \citep{liu2020competing,jagadeesan2021learning,cen2022regret,basu2021beyond,sankararaman2021dominate,maheshwari2022decentralized,kong2022thompson}, and second is the setting where the underlying preferences can change with time \citep{min2022learn,ghosh2022decentralized}.
Interestingly, the literature on learning in time-varying matching markets is relatively sparse \citep{min2022learn,ghosh2022decentralized}. \cite{min2022learn} consider the setting of \emph{Markov Matching Market} where the preferences of agents depends on some underlying context which changes based on planner's policy in a Markovian manner. Meanwhile, in \cite{ghosh2022decentralized}, the authors consider the problem of learning in \emph{smoothly varying} matching markets where the underlying preferences of agents are allowed to change by a certain threshold at every time step. Although the authors consider learning in a decentralized context, the preference structure they study is restricted to a serial dictatorship.  

\section{PROBLEM FORMULATION}
Consider a two-sided matching market comprised of  \emph{players} and \emph{arms}, to be referred respectively as the two sides, who gain some positive utility if matched with one another.    
More formally, we denote the set of \(N\) players by $\mathcal{N} = \{p_1, p_2, \dots, p_N\}$ and the set of \(K\) arms by $\mathcal{K} = \{a_1, a_2, \dots, a_K \}$. An important feature of two-sided matching market is that each side of the market has preference over the other side. Preference of any arm \(a_j\in \mathcal{K}\) over players is captured by the utility \(\pi_j\in \mathbb{R}^N_+\), where \(\pi_j(i)\) is the utility derived by arm \(a_j\) on getting matched with player \(p_i\).  Analogously, preference of a player \(p_i\in \mathcal{N}\) over arms is captured by utility \(\mu_{i}\in \mathbb{R}^K_+\), where \(\mu_i(k)\) denotes the utility derived by player \(p_i\) on getting matched with arm \(a_k\).

For the sake of concise notation, if arm \(a_j\) prefers player \(p_i\) over \(p_{i'}\) then we represent it as  
$p_{i'} \succ_{j} p_i$. Similarly, if player \(p_i\) prefers arm \(a_k\) over \(a_{k'}\) then we represent it as \(a_{k}\succ_{i} a_{k'}\).

In what follows, we first describe the framework of two-sided matching market and state relevant background in Section \ref{sec:matchingmarkets}. Following which, in Section \ref{sec:learning-matchingmarkets}, we describe the problem formulation for learning in two-sided time-varying matching markets.

\subsection{Preliminaries on Matching Markets}
\label{sec:matchingmarkets}
In order to formally present the setup in this paper, we recall some relevant concepts on the literature on two-sided matching markets. A matching \(m:\mathcal{N}\rightarrow \mathcal{K}\) is an injective map such that \(m(p)=a\) denotes that player \(p\in\mathcal{N}\) is matched with arm \(a\in\mathcal{K}\). 
\begin{definition}[Blocking Pair]
We say a tuple \((p_i,a_j)\in \mathcal{N}\times \mathcal{K}\) is a blocking pair for a matching \(m\) if player \(p_i\) is matched to an arm \(a_{j'}=m(p_i)\) but 
 $a_j \succ_i a_{j'}$ and $a_{j}$ is either unmatched or $p_i \succ_{j} m^{-1}(a_j)$. We say that the triplet \((p_i,a_j,a_{j'})\) blocks the matching \(m\).
\end{definition}
Having defined the notion of blocking pair, we are now ready to define stable matching. 
\begin{definition}[Stable matching]
A matching \(m\) is called \emph{stable} if there is no blocking pair. Alternatively, a matching is called \emph{unstable} if there exists atleast one blocking pair for it. 
\end{definition}
Gale and Shapley 1962 proposed a polynomial time algorithm -- referred as Deferred-Acceptance (DA) algorithm -- to find a stable matching. 
Without enforcing any specific assumptions on the underlying preference structure, the stable matching is not unique. Therefore, we define the notion of valid partners of a player which captures the set of all arm to which it can match in some stable matching. 
\begin{definition}[Valid partner]
Given the full preference rankings of arms and players, we call arm \(a_j\) to be a valid partner of player \(p_i\) if there exists a stable matching \(\textbf{m}\) such that \(\textbf{m}(p_i)=a_j\).
\end{definition}
We now define two important types of matching which are very crucial for subsequent exposition. 
\begin{definition}[Optimal matching and pessimal matching]
We say a matching \(\optimalMatch\) to be \emph{optimal} matching if every player is matched to its most preferred valid partner. Similarly we say a matching \(\pessimalMatch\) to be \emph{pessimal} matching if every player is matched to its least preferred valid partner.
\end{definition}
In order to find optimal and pessimal matching one can initialize the Deferred-Acceptance algorithm from players side and the arms side respectively. Note that if there is a unique stable matching then the optimal and pessimal matching coincide. Interestingly, in \cite{karpov2019necessary} the authors provide necessary and sufficient conditions on underlying preference structure which ensure unique stable matching.



\subsection{Learning in Time Varying Matching Markets}
\label{sec:learning-matchingmarkets}
While a stable matching can be identified directly by employing Deferred-Acceptance algorithm when the players know their preferences, in many modern matching markets the players can be unaware of their preferences. 
Furthermore, the underlying preferences can be time-varying.
In this paper, we study the problem of bandit learning in time varying matching markets, where the objective is to find a stable match by repeated interactions. As an example, consider the setting of online labor markets  where the two sides of the market are employers (players) and freelancers (arms) respectively. Due to the scale of system, the employers do not know apriori the quality of work by a freelancer and has to repeatedly interact with it. Furthermore, the inherent quality of work by a freelancer could change in a nonstationary manner due to some health reasons, personal problems etc. Consequently the employers have to adapt to this nonstationary change while deciding its preference estimates.


We formulate the repeated market setting as follows. We assume that the preferences of arms are \emph{fixed} and is \emph{common knowledge}. However, the preferences of players are \emph{unknown} and \emph{time-varying}. The players and arms interact with each other for a total of \(T\) rounds, indexed by \(t\). 
At any time \(t\), the true underlying preference
of any player \(p_i\in \mathcal{N}\) over any arm \(a_j\in \mathcal{K}\) is encapsulated by mean reward of its interaction, denoted by \(\mu_{i,t}(j)\), which is unknown. The players repeatedly interact with arms through a platform to learn the underlying utilities. 

At every round $t$, the players submit their \emph{estimate} of their preferences over the arms to a platform based on past rounds of interaction.
The platform then computes a stable matching \(m_t\) based on the submitted preferences and assigns the players to the arms accordingly. Upon being assigned or matched an arm $m_t(i)$, the player $p_i$ pulls the arm $m_t(i)$ and receives a stochastic reward $X_{i,m_t}$ sampled from a 1-{sub}Gaussian distribution with mean $\mu_{i,t}(m_t(i))$.
 The players use the observed reward to update their estimate of preferences over the arms. 

In order to evaluate the performance of any online learning algorithm this setup, we introduce the relevant notion of \emph{regret}. 
Unlike the single player learning setting, in a two-sided matching market, all of the players cannot be assigned their most preferred arm due to the misaligned preference structure of the different players and arms.
With this consideration, \cite{liu2020competing} proposed a regret metric for the stationary preference setting which we extend here to non-stationary setting. This notion of regret, termed as {\it arm-stable regret} with respect to a stable matching \(\stableMatch=(\stableMatch_t)_{t\in[T]}\) corresponding to true underlying preferences, is given by
\beq 
R^i_T(\stableMatch) = \sum_{t=1}^T \mu_{i,t}(\stableMatch_t(i)) - \sum_{t=1}^T \mu_{i,t}(m_t(i)),
\label{eq:regret}
\eeq 
where $\mu_{i,t}(\stableMatch_t(i))$ is the mean reward of the arm that would be assigned to a player $p_i$ in a stable matching outcome if the players know the true underlying preference.
In any two-sided matching market there are two important matchings: \emph{player-optimal} and \emph{player-pessimal} matching. Consequently, there are two associated regret metrics: {\it player-optimal regret}, which is the regret with respect to $\optimalMatch$ optimal matching, and the other notion is the {\it player-pessimal regret}, which is the regret with respect to $\pessimalMatch$ the pessimal match. As pointed out in \cite{liu2020competing}, it is not possible to achieve sub-linear player-optimal regret even in static environment without imposing any assumptions on the underlying preference structure. Therefore,
we adopt the {\it player-pessimal regret} as the metric of performance.


In order to quantify the non-stationarity in an environment, we introduce the notion of time-variation which capture the variability in the underlying true preferences. 
The time-variation in a single player bandit learning is typically specified by a quantity called the total variation \cite{besbes2019optimal} or the total number of changes \cite{auer2019adaptively}. The nature of the interdependence in the matching setting is such the change in other players' preferences can lead to lower reward than pessimal match often even sans changes to its own preferences. Such occurrences are not bounded by the extent of variation and are only dependent on occurrence of changes in the order of player's preferences. Therefore, we adopt the notion of number of changes as the variation measure for the competing bandits setting. Specifically, we define the variation measure as the total number of changes across all the players:
\begin{align}
& L_T = \sum_{t = 2}^T \sum_{i \in \mathcal{N}} \sum_{j \in \mathcal{K}} \mathbb{I}[\mu_{i,t}(j) \neq \mu_{i,t-1}(j)], 
\label{eq:totalvariation}
\end{align} 
where $\mathbb{I}[\cdot]$ is the standard indicator function. Such a definition accounts for all the variations in the market and its impact uniformly across all the players. 
Naturally, if the player's preferences are fixed then \(L_T=0\). 

{In any realistic scenario the preferences does not change arbitrarily over time. Furthermore, it is required to assume certain structure on the underlying preference structure to get any meaningful quantitative guarantees.  Against this backdrop, we make the following assumption on the preferences of players.
\begin{assumption}
We make the following assumptions on the mean reward of arms    
\begin{itemize}
    \item[(i)] The mean reward of arms are bounded. That is \(\mu_{i,t}(k)\leq \bar{\mu}\) for all \(t\in [T], i\in [N], k\in [K]\);
    \item[(ii)] The gap between the arms for any player is always greater than $\Delta$. That is,\newline \(0<\Delta= \min_{t}\)\(\min_{i,k,k'}|\mu_{i,t}(k)-\mu_{i,t}(k')|\)
\end{itemize}
\label{ass:boundedness}
\end{assumption}}
{Some remarks about Assumption \ref{ass:boundedness} are in order.
Assumption \ref{ass:boundedness}-(i) and \ref{ass:boundedness}-(ii) are required to obtain any meaningful regret guarantees for competing bandits in non-stationary matching markets with no structure on preferences. Furthermore, we argue that these are practical assumptions on the reward obtained in any real-world application. Our focus here is on changing preferences on which we do not impose any restriction at all.  We note that we do not require any assumption on the magnitude of instantaneous change on the preferences of any player, as it typically assumed in many prior works on non-stationary bandits \cite{krishnamurthy2021slowly,garivier2011upper}. Furthermore, the change in underlying true preferences of players can occur asynchronously. 

}


\section{ALGORITHM AND RESULTS}

There are three key challenges associated with designing effective algorithm that works in this setting: uncertainty, competition and non-stationarity. Overcoming each of these challenges individually has been studied extensively in the existing literature. However, the key challenge is to develop a provable effective algorithm that can overcome these challenges in a holistic manner. Particularly, any single player cannot explore independently in order learn its preferences because the arm it gets matched with is decided by submitted preferences of all players who are also exploring different arms simultaneously. Furthermore, this challenge gets exacerbated by inherent non-stationarity in the environment as the change in preference of any single player can effect the stable match. Interestingly, we show that the proposed algorithmic design, to be presented, handles this challenge while being able to achieve low player pessimal regret.

\subsection{Algorithmic Description}
Our algorithm is an extension of the bandit learning algorithm proposed by \cite{liu2020competing} for the stationary setting, where players repeatedly interact with the market platform by submitting a preference order over the arms. After receiving the preferences, the platform assigns players to arms as per a stable matching computed using the DA algorithm on the set of preferences submitted by the players. Upon getting matched to an arm, the players receive a reward, which they then use to update their preferences before the next round. The key idea in \cite{liu2020competing} is how the players compute their preferences, which enables them to efficiently explore the arms and let the market converge to the stable outcome in spite of the competition among the players.
Specifically, each player submits the preference order as per the Upper Confidence Bound (UCB) for the mean reward of the arms, computed using the collected reward from previous time steps. 


We extend this algorithm to the time varying setting by adopting a restart strategy. Restart strategy is a widely used strategy for time varying bandit learning \citep{zhao2020simple}. The idea in restart strategy is to restart a base algorithm (such as UCB algorithm) after a certain period $H$. If the number of changes $L_T$ is known, then the restart period can be chosen optimally a priori. We adopt this simple strategy to handle the time variations in the competing bandits case. Specifically, in the algorithm we propose, \emph{the market platform ensures that players restart the local UCB algorithm after every $H$ number of rounds and between two restarts the players run their local UCB algorithms just as in the stationary case \cite{liu2020competing}}. The complete algorithm is shown in Algorithm \ref{alg:bl-cb-tvm}. 

\LinesNumberedHidden{\begin{algorithm2e}[]
\DontPrintSemicolon
\KwInput{Common Restart Period: $H$}
Set $k = 1$

\For{$t = \{1, \dots, T\}$}
{

If \(k=1\), set \(T_{i,t}(j) = 0\) for all \(j\in [K], i\in [N]\)

Players update the UCB, $(u_{i,t}(j))_{j\in K}$ for their respective arms using \eqref{eq:ucb}. 

Players submit their rank ordering $\hat{r}_{i,t}$ computed according to $u_{i,t}(j)$ using \eqref{eq:rankordering}. 

Players are matched to arms as per optimal stable matching $m_t$ computed using $\hat{r}_{i,t}$ and $\pi_j$.

For all \(i\in[N]\), update \(T_{i,t+1}(m_t(i)) = T_{i,t}(m_t(i))+1\)

If \(k\leq H\) update $k = k+1$, else update \(k=1\)
}

\caption{Restart Competing Bandits (RCB) Algorithm}
\label{alg:bl-cb-tvm}
\end{algorithm2e}}

We now introduce the algorithm more formally. Let $T_{i,t}(j)$ denote the number of times player $p_i$ is matched with arm $a_j$ by the platform after the latest restart. After every restart, $T_{i,t}(j)$ is assigned to zero, and $T_{i,t}(j)$ is incremented by one whenever player $p_i$ is matched with arm $a_j$. Let $\hat{\mu}_{i,t}(j)$ denote the mean of the rewards observed whenever player $p_i$ pulls arm $a_j$ after a restart and till $t-1$.
Let \(S_{t}\) denote the latest restart before time step \(t\). Then, $\hat{\mu}_{i,t}(j)$ is given by 
\(
\hat{\mu}_{i,t}(j) = \frac{\sum_{k = S_t}^{t-1} X_{i,m_k} \mathbb{I}[m_k(i) = j]}{T_{i,t-1}(j)}.
\)
Then, for a player $p_i$, the upper confidence bound for arm $a_j$ in the current period is given by
\beq 
u_{i,t}(j) = \left\{ \begin{array}{cc} \infty & T_{i,t}(j) = 0 \\ \hat{\mu}_{i,t}(j) + \sqrt{\frac{3\log(t-S_t)}{2T_{i,t-1}(j)}}& \tn{Otherwise}. \end{array}\right.
\label{eq:ucb}
\eeq 
At the beginning of every round $t$, each player updates its UCB for the arms as in Eq. \eqref{eq:ucb}. The players then compute their respective rank ordering $\hat{r}_{i,t}$ according to the updated UCBs. Specifically, the rank ordering $\hat{r}_{i,t}$ is computed as follows: for any two arms \(a_j, a_{j'}\)
\beq 
\hat{r}_{i,t}(j) > \hat{r}_{i,t}(j') ~ \tn{if} ~ u_{i,t}(j) >  u_{i,t}(j').
\label{eq:rankordering}
\eeq 
The platform receives these rank orderings from every player, computes a stable matching, and assigns the matched arms to the respective players. This completes a round.

Algorithm \ref{alg:bl-cb-tvm} has several important features: first, it is a simple algorithm which us computationally efficient and intuitive. Second, the algorithm does not require storing exact rewards obtained in past rounds at the end of both players and platforms. Third, the preferences of players are updated only using local information which is a crucial aspect to ensure privacy and reliability of the system.
\subsection{Regret Result}

Below, we characterize the regret accrued by a player while using the RCB algorithm (Algorithm \ref{alg:bl-cb-tvm}) under time-varying preferences. 
\begin{theorem}
{Suppose Assumption \ref{ass:boundedness} holds. Under RCB algorithm (Algorithm \ref{alg:bl-cb-tvm}), with the common restart period $ H = L_T^{-1/2}{T}^{1/2}$, the pessimal regret for each player $i$ is given as 
\beq 
R^i_T = \widetilde{\mathcal{O}}\left(L^{1/2}_TT^{1/2}\left(1 + \frac{1}{\Delta^2}\right)\right).\nonumber 
\eeq }
\label{thm:rcb}
\end{theorem}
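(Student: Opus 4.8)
The plan is to exploit the restart structure to reduce the non-stationary analysis to a sequence of stationary ones and then invoke the stationary guarantee of Liu et al.\ (2020) blockwise. First I would partition the horizon $[T]$ into $\lceil T/H\rceil$ consecutive epochs of length $H$ delimited by the restarts $S_t$, so that inside a single epoch the counters $T_{i,t}(j)$ and the confidence width $\sqrt{3\log(t-S_t)/(2T_{i,t-1}(j))}$ in \eqref{eq:ucb} depend only on data collected within that epoch. I would then classify an epoch as \emph{clean} if none of the indicator terms in \eqref{eq:totalvariation} fire during it (no player's mean-reward vector changes) and \emph{corrupted} otherwise. Since each of the $L_T$ elementary changes lies in exactly one epoch, at most $L_T$ epochs are corrupted and the remaining ones are clean.

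Next I would bound the two contributions separately. On a corrupted epoch I use only boundedness (Assumption \ref{ass:boundedness}-(i)): each of the $H$ rounds contributes at most $\bar\mu$ to the per-player regret in \eqref{eq:regret}, so the total regret charged to corrupted epochs is at most $\bar\mu H L_T$. On a clean epoch the players' preferences are constant, so the player-pessimal stable matching is fixed throughout the epoch and coincides with the target $\pessimalMatch_t$ appearing in \eqref{eq:regret}. Because the restart has zeroed all counters and the confidence radius uses $\log(t-S_t)$, the run of Algorithm \ref{alg:bl-cb-tvm} restricted to a clean epoch is exactly the stationary competing-bandits algorithm of \citet{liu2020competing} on a horizon of length $H$; I would therefore invoke their player-pessimal regret bound as a black box to obtain a per-epoch regret of order $\widetilde{\mathcal{O}}(1/\Delta^2)$ (absorbing $\log H$ and the $N,K$ dependence into $\widetilde{\mathcal{O}}$), and sum over the at most $\lceil T/H\rceil$ clean epochs.

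Combining the two pieces gives
\beq
R^i_T = \widetilde{\mathcal{O}}\!\left(\frac{T}{H}\cdot\frac{1}{\Delta^2}\right) + \mathcal{O}\!\left(\bar\mu\, H\, L_T\right),\nonumber
\eeq
and it remains to optimize the free parameter $H$. Balancing the two terms gives $H \asymp L_T^{-1/2}T^{1/2}$, which is exactly the choice in the statement; substituting it turns the first term into $\widetilde{\mathcal{O}}(L_T^{1/2}T^{1/2}/\Delta^2)$ and the second into $\mathcal{O}(L_T^{1/2}T^{1/2})$, and these sum to the claimed $\widetilde{\mathcal{O}}(L_T^{1/2}T^{1/2}(1+1/\Delta^2))$.

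The main obstacle I expect is the clean-epoch step: justifying that the stationary bound of \citet{liu2020competing} transfers verbatim to a length-$H$ epoch. This requires verifying that the restart genuinely decouples epochs, so that no reward collected before $S_t$ leaks into $\hat\mu_{i,t}(j)$ or into the confidence width, and that on a clean epoch the pessimal matching to which the stationary algorithm converges is the same $\pessimalMatch_t$ used in \eqref{eq:regret} --- here the gap condition Assumption \ref{ass:boundedness}-(ii) is what keeps the pessimal match well separated and identifiable within the epoch. A secondary point is that a change in the interior of an epoch corrupts the whole epoch rather than only its later rounds, which is precisely why the crude $\bar\mu H$ bound is used there and why counting at most $L_T$ corrupted epochs suffices.
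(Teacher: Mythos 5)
Your proposal is correct and follows essentially the same route as the paper: the proof of Lemma \ref{lem:AR} likewise splits each restart epoch at the first preference change, bounds the pre-change segment via the stationary cover-based analysis of \citet{liu2020competing} (giving $\widetilde{\mathcal{O}}\left(1+\log(H)/\Delta^2\right)$ per epoch, with the same market-size constant you absorb), charges the post-change remainder crudely at $2\bar\mu$ per round, and then balances $H=L_T^{-1/2}T^{1/2}$. Your only cosmetic deviation is classifying entire epochs as clean or corrupted instead of splitting a corrupted epoch at its first change; this loses nothing, since at most $L_T$ epochs are corrupted and the crude $\mathcal{O}(\bar\mu H)$ charge per corrupted epoch reproduces the paper's $\mathcal{O}(L_T H)$ term.
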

{Some comments about Theorem \ref{thm:rcb} are in order. First, we note that the constant in the above regret bound can be exponentially large in the size of the market, which is also the case with \cite{liu2020competing} which considered stationary preferences of players.  Second, the achievable regret bounds in a single agent MAB setting is $\widetilde{O}(\sqrt{KL_TT})$ \cite{auer2002finite}. Our regret bounds are similar to the single agent multi-arm bandit setting up to some constant, although under the assumption that the gap between the arm of any two players is greater than $\Delta$ at all the times. As pointed out in \cite{liu2020competing}, the dependence on $\Delta^2$ cannot be improved in general because of the dependence of a player's regret on the gaps of other players in the competing bandit setting. For the same reason, it is not be possible to extend the results to the case where the arm gaps for a player can be arbitrarily close to each other. Third, we note that the regret \(R_T\) can be negative, which is not less desirable as player's can receive better utility than the pessimal matching.

A detailed proof of Theorem 1 can be found in the extended version of this paper \cite{muthirayan2022competing}. However, we provide a sketch of the proof below.

\textbf{Proof Sketch}:
{In the following analysis we only give the key steps. We start by analyzing the agent regret for a player $p_i$ within an interval of length $H$ from one restart to the next. The total regret can then be computed by summing the regret across the intervals of length $H$. Let us denote the start and the end of the $\ell$th such interval by $t^\ell_s$ and $t^\ell_e$. Let us denote the agent regret within the $\ell$th interval by $R^i_\ell$.
Then, the player \(i\)'s regret in interval $\ell$ is 
\( 
R^i_\ell = \sum_{t = t^\ell_s}^{t^\ell_e} \left( \mu_{i,t}(\pessimalMatch_t(i)) - \mu_{i,t}(m_t(i))\right). \nonumber 
\)

The critical part to bounding this is to account for the fact that (i) unlike in a single agent setting, given that all the players are exploring and learning, an unstable match with lower reward than its valid partners can be enforced on a player by the preferences submitted by the other players and (ii) the underlying preferences and consequently the set of stable matching themselves are time varying. While the first part was addressed by \cite{liu2020competing} for the case where the underlying preferences of the players are stationary, our analysis addresses how to account for time variation of the underlying preferences and the further complexity this leads to on how the preferences submitted by other players can force an unstable match. {The key to our analysis is the use of the minimal cover of an unstable match before the occurrence of the first change across the players within the interval.} Below, we present a crucial lemma which provides the pessimal regret for any player between any two restarts. The proof of the lemma is deferred to the appendix. 
\begin{lemma}\label{lem: IntervalLemma}
Suppose Assumption \ref{ass:boundedness} holds. Then, under the RCB algorithm (Algorithm \ref{alg:bl-cb-tvm}), the pessimal regret for a player $i$ between \((\ell-1)th\) restart and \(\ell\)th restart is given by
\beq 
R^i_{\ell} \leq \mathcal{O}(KL_\ell H) + \mathcal{O}\left(K\left(1 + \frac{\log(H)}{\Delta^2}\right) \right). 
\nonumber 
\eeq 
\label{lem:AR}
\end{lemma}

Then, the agent-stable regret for all the rounds can be bound by summing the pessimal regrets for the individual intervals. Hence,
\(
R^i_T \leq \sum_{\ell=1}^{\floor{T/H}} R^i_{\ell} \leq \mathcal{O}(L_T H) + \widetilde{\mathcal{O}}\left( \frac{T}{H\Delta^2} \right). \nonumber 
\)
Substituting for $H=L_T^{-1/2}T^{1/2}$ gives us the final result.}

\begin{remark}[Relation to \cite{liu2020competing}]
The bound we present in Lemma \ref{lem:AR} clearly brings out the relation with respect to \cite{liu2020competing}. The lemma presents the regret incurred by a player within an interval after the restart. The constant accompanying the main term $\widetilde{\mathcal{O}}(1+1/\Delta^2)$ is equivalent to the constant in the time invariant case, which is the sum over the minimal cover induced by blocking pairs that block an unstable match. The difference in our case is that this minimal cover corresponds to the set of blocking pairs before the first change or variation within the interval under consideration, which reduces to the constant in \cite{liu2020competing} in the absence of time variations. The first term in Lemma \ref{lem:AR}, $\mathcal{O}\left(L_\ell H\right)$, reflects the impact on the regret from the time variations. This term obviously vanishes in the absence of time variations.
\end{remark}

\section{EXTENSION TO UNKNOWN TIME VARIATION}
Algorithm \ref{alg:bl-cb-tvm} required knowledge of time variation \(L_T\) in order to compute the optimal restart period \(H\). However, in a lot of real-world applications it is not known a priori. 
One approach to extend the RCB algorithm to the unknown time variation case is the {\it bandits over bandits} approach \citep{zhao2020simple}. The key idea in this approach is to adaptively tune the restart period by exploring the restart periods from an ensemble that ranges from a very low restart period to a very high restart period. Specifically, a {\it meta-bandit algorithm} is used to periodically reset the restart period of the base restart algorithm by treating each period in the ensemble as an arm and employing a bandit algorithm to choose the restart periods while resetting. The bandit algorithm ensures that the exploration of the restart periods is efficient and that the process of exploring the restart period converges to the best restart period.

The same idea can be extended to learning in two-sided matching markets. The platform can employ a bandit meta-algorithm to explore and adaptively choose the best restart period for the base RCB algorithm. To ensure that the ensemble of restart periods is sufficiently rich, the following can be chosen as the ensemble: \(\mathcal{H} = \{H = 2^{j-1}, j \in [1,N] \}, \) where $N = \ceil{1/2\log{T}}+1$. Lets call the restart period defined in Theorem \ref{thm:rcb} by $H^{*}$. Then, $H^{*} = \mathcal{O}(T^{1/2})$ when $L_T = \mathcal{O}(1)$ and $H^{*} = \mathcal{O}(1)$ when $L_T = \mathcal{O}(T)$. Therefore, the ensemble $\mathcal{H}$ as defined contains the $H^{*}$s corresponding to the full range of $L_T$ and is thus sufficiently rich. To explore the restart periods, the meta-algorithm can divide the overall duration $T$ into periods of duration $\Upsilon = \mathcal{O}(T^{1/2})$ and in each period it can explore a specific restart period. The meta-algorithm can employ a standard bandit algorithm such as the EXP3 \citep{auer2002finite} to explore the restart periods efficiently. 

The key issue in the competing bandits setting is how to define the reward for the EXP3 meta-algorithm. We discuss this more elaborately below. Let the matching under RCB with the fixed restart period $H^{*}$ be denoted by $m^{*}_t$. Then, the regret for each player can be split as
\begin{align}
&  R^i_T = \sum_{t=1}^T \mu_{i,t}(\stableMatch_t(i)) - \sum_{t=1}^T \mu_{i,t}(m_t(i)) \nonumber \\
& = \underbrace{\left[\sum_{t=1}^T \mu_{i,t}(\stableMatch_t(i)) -\sum_{t=1}^T \mu_{i,t}(m^{*}_t(i))\right]}_{\tn{Base Regret for $p_i$} } + \underbrace{\sum_{t=1}^T \left[\mu_{i,t}(m^{*}_t(i))\right] - \sum_{t=1}^T \left[\mu_{i,t}(m_t(i))\right]}_{\tn{Meta-regret for $p_i$}} \nonumber 
\end{align}
The first term is the regret incurred when following a fixed restart period. The second term is the difference between the rewards accumulated by following a fixed restart period and the meta-algorithm that adaptively tunes the restart period by exploring from an ensemble of periods. We refer to this second term as the {\it meta-regret}. 

The current proof technique to bound base regret requires that all agents restart simultaneously. Therefore, in this setting we consider that platform coordinates the restarting period uniformly across all players. Consequently, we can only obtain bounds on the joint regret of all players. Towards that goal, we define the reward for the meta-algorithm as the sum total of the accumulated rewards over the period $\Upsilon$ across the players. 
The term $\sum_{t=1}^T \mu_{i,t}(m^{*}_t(i))$ corresponds to the total returns with the restart period $H^{*}$. Since $H^{*} \in \mathcal{H}$, the total return must be less than or equal to the returns corresponding to the best restart period for the players. Therefore, the {\it meta-regret} term must be upper bounded by the actual regret of the meta-algorithm, which can be bounded just as in \cite[Theorem 3]{zhao2020simple} by $\widetilde{O}(T^{3/4})$; see \cite[Theorem 3]{zhao2020simple} for the details. Then, the joint regret when the number of changes are unknown can be bounded as
{\( \sum_{i=1}^\mathcal{N} R^i_T = \widetilde{\mathcal{O}}\left(NL^{1/2}_TT^{1/2}\right) + \widetilde{O}(T^{3/4}). \)

The exploration required for the restart period results in an additional regret on top of the regret in Theorem \ref{thm:rcb}. Therefore, in the unknown $L_T$ case, the achievable total regret is limited by the $\widetilde{O}(T^{3/4})$ term. It is possible that our regret bounds could be improved by a better exploration strategy. In addition, it remains an open problem to bound the individual regret of the players when the number of changes are unknown.

 \section{CONCLUSIONS AND FUTURE WORK}

In this work, we study the framework of competing bandits in time varying matching markets. Specifically, we address the problem of how the players can learn the preferences amidst the competition so that the platform can converge to a stable matching outcome. While this specific challenge has been studied earlier, we address how the players can learn when the players preferences themselves can be time varying. 
For this problem, we propose a {\it Restart Competing Bandits} algorithm that can provably achieve sub-linear regret uniformly across the players up to sub-linear variations in the total number of changes across the players. The key insight is that the non-stationarity does not drastically impact learning in matching markets. Our result is \emph{first-of-its-kind} for a general preference structure. We also discuss extension of our algorithm to the case where the number of changes need not be known a priori and discuss possible open challenges. 
There are several directions for future work. One direction is the meta-algorithm strategy that can achieve the same player pessimal regret for each player as the case where the total number of changes are known. Another major direction is the decentralized setting where there is no central platform to coordinate or communicate with the players. While this problem has been studied for the stationary case, its extension to non-stationary environment is non-trivial. 

\bibliography{Refs}

\begin{thebibliography}{32}
\providecommand{\natexlab}[1]{#1}
\providecommand{\url}[1]{\texttt{#1}}
\expandafter\ifx\csname urlstyle\endcsname\relax
  \providecommand{\doi}[1]{doi: #1}\else
  \providecommand{\doi}{doi: \begingroup \urlstyle{rm}\Url}\fi

\bibitem[Allesiardo et~al.(2017)Allesiardo, F{\'e}raud, and
  Maillard]{allesiardo2017non}
Robin Allesiardo, Rapha{\"e}l F{\'e}raud, and Odalric-Ambrym Maillard.
\newblock The non-stationary stochastic multi-armed bandit problem.
\newblock \emph{International Journal of Data Science and Analytics},
  3\penalty0 (4):\penalty0 267--283, 2017.

\bibitem[Auer et~al.(2002)Auer, Cesa-Bianchi, and Fischer]{auer2002finite}
Peter Auer, Nicolo Cesa-Bianchi, and Paul Fischer.
\newblock Finite-time analysis of the multiarmed bandit problem.
\newblock \emph{Machine learning}, 47\penalty0 (2):\penalty0 235--256, 2002.

\bibitem[Auer et~al.(2019)Auer, Gajane, and Ortner]{auer2019adaptively}
Peter Auer, Pratik Gajane, and Ronald Ortner.
\newblock Adaptively tracking the best bandit arm with an unknown number of
  distribution changes.
\newblock pages 138--158, 2019.

\bibitem[Basu et~al.(2021)Basu, Sankararaman, and Sankararaman]{basu2021beyond}
Soumya Basu, Karthik~Abinav Sankararaman, and Abishek Sankararaman.
\newblock Beyond log-squared regret for decentralized bandits in matching
  markets.
\newblock \emph{arXiv preprint arXiv:2103.07501}, 2021.

\bibitem[Besbes et~al.(2019)Besbes, Gur, and Zeevi]{besbes2019optimal}
Omar Besbes, Yonatan Gur, and Assaf Zeevi.
\newblock Optimal exploration-exploitation in multi-armed-bandit problems with
  non-stationary rewards.
\newblock pages 319--337, 2019.

\bibitem[Cen and Shah(2022)]{cen2022regret}
Sarah~H Cen and Devavrat Shah.
\newblock Regret, stability \& fairness in matching markets with bandit
  learners.
\newblock In \emph{International Conference on Artificial Intelligence and
  Statistics}, pages 8938--8968. PMLR, 2022.

\bibitem[Cheung et~al.(2019)Cheung, Simchi-Levi, and Zhu]{cheung2019learning}
Wang~Chi Cheung, David Simchi-Levi, and Ruihao Zhu.
\newblock Learning to optimize under non-stationarity.
\newblock pages 1079--1087, 2019.

\bibitem[Dai and Jordan(2021)]{dai2021learning}
Xiaowu Dai and Michael Jordan.
\newblock Learning in multi-stage decentralized matching markets.
\newblock \emph{Advances in Neural Information Processing Systems},
  34:\penalty0 12798--12809, 2021.

\bibitem[Das and Kamenica(2005)]{das2005two}
Sanmay Das and Emir Kamenica.
\newblock Two-sided bandits and the dating market.
\newblock In \emph{IJCAI}, volume~5, page~19. Citeseer, 2005.

\bibitem[Edmonds(1965)]{edmonds1965maximum}
Jack Edmonds.
\newblock Maximum matching and a polyhedron with 0, 1-vertices.
\newblock \emph{Journal of research of the National Bureau of Standards B},
  69\penalty0 (125-130):\penalty0 55--56, 1965.

\bibitem[Gale and Shapley(1962)]{gale1962college}
David Gale and Lloyd~S Shapley.
\newblock College admissions and the stability of marriage.
\newblock \emph{The American Mathematical Monthly}, 69\penalty0 (1):\penalty0
  9--15, 1962.

\bibitem[Garivier and Moulines(2011)]{garivier2011upper}
Aur{\'e}lien Garivier and Eric Moulines.
\newblock On upper-confidence bound policies for switching bandit problems.
\newblock pages 174--188, 2011.

\bibitem[Ghosh et~al.(2022)Ghosh, Sankararaman, Ramchandran, Javidi, and
  Mazumdar]{ghosh2022decentralized}
Avishek Ghosh, Abishek Sankararaman, Kannan Ramchandran, Tara Javidi, and Arya
  Mazumdar.
\newblock Decentralized competing bandits in non-stationary matching markets.
\newblock \emph{arXiv preprint arXiv:2206.00120}, 2022.

\bibitem[Jagadeesan et~al.(2021)Jagadeesan, Wei, Wang, Jordan, and
  Steinhardt]{jagadeesan2021learning}
Meena Jagadeesan, Alexander Wei, Yixin Wang, Michael Jordan, and Jacob
  Steinhardt.
\newblock Learning equilibria in matching markets from bandit feedback.
\newblock \emph{Advances in Neural Information Processing Systems}, 34, 2021.

\bibitem[Karpov(2019)]{karpov2019necessary}
Alexander Karpov.
\newblock A necessary and sufficient condition for uniqueness consistency in
  the stable marriage matching problem.
\newblock \emph{Economics Letters}, 178:\penalty0 63--65, 2019.

\bibitem[Knuth(1997)]{knuth1997stable}
Donald~Ervin Knuth.
\newblock \emph{Stable marriage and its relation to other combinatorial
  problems: An introduction to the mathematical analysis of algorithms},
  volume~10.
\newblock American Mathematical Soc., 1997.

\bibitem[Kong et~al.(2022)Kong, Yin, and Li]{kong2022thompson}
Fang Kong, Junming Yin, and Shuai Li.
\newblock Thompson sampling for bandit learning in matching markets.
\newblock \emph{arXiv preprint arXiv:2204.12048}, 2022.

\bibitem[Krishnamurthy and Gopalan(2021)]{krishnamurthy2021slowly}
Ramakrishnan Krishnamurthy and Aditya Gopalan.
\newblock On slowly-varying non-stationary bandits.
\newblock \emph{arXiv preprint arXiv:2110.12916}, 2021.

\bibitem[Liu et~al.(2020)Liu, Mania, and Jordan]{liu2020competing}
Lydia~T Liu, Horia Mania, and Michael Jordan.
\newblock Competing bandits in matching markets.
\newblock pages 1618--1628, 2020.

\bibitem[Liu et~al.(2021)Liu, Ruan, Mania, and Jordan]{liu2021bandit}
Lydia~T Liu, Feng Ruan, Horia Mania, and Michael~I Jordan.
\newblock Bandit learning in decentralized matching markets.
\newblock \emph{Journal of Machine Learning Research}, 22\penalty0
  (211):\penalty0 1--34, 2021.

\bibitem[Lowalekar et~al.(2018)Lowalekar, Varakantham, and
  Jaillet]{lowalekar2018online}
Meghna Lowalekar, Pradeep Varakantham, and Patrick Jaillet.
\newblock Online spatio-temporal matching in stochastic and dynamic domains.
\newblock \emph{Artificial Intelligence}, 261:\penalty0 71--112, 2018.

\bibitem[Maggs and Sitaraman(2015)]{maggs2015algorithmic}
Bruce~M Maggs and Ramesh~K Sitaraman.
\newblock Algorithmic nuggets in content delivery.
\newblock \emph{ACM SIGCOMM Computer Communication Review}, 45\penalty0
  (3):\penalty0 52--66, 2015.

\bibitem[Maheshwari et~al.(2022)Maheshwari, Mazumdar, and
  Sastry]{maheshwari2022decentralized}
Chinmay Maheshwari, Eric Mazumdar, and Shankar Sastry.
\newblock Decentralized, communication-and coordination-free learning in
  structured matching markets.
\newblock \emph{arXiv preprint arXiv:2206.02344}, 2022.

\bibitem[Min et~al.(2022)Min, Wang, Xu, Wang, Jordan, and Yang]{min2022learn}
Yifei Min, Tianhao Wang, Ruitu Xu, Zhaoran Wang, Michael~I Jordan, and Zhuoran
  Yang.
\newblock Learn to match with no regret: Reinforcement learning in markov
  matching markets.
\newblock \emph{arXiv preprint arXiv:2203.03684}, 2022.

\bibitem[Muthirayan et~al.(2020)Muthirayan, Parvania, and
  Khargonekar]{muthirayan2020online}
Deepan Muthirayan, Masood Parvania, and Pramod~P Khargonekar.
\newblock Online algorithms for dynamic matching markets in power distribution
  systems.
\newblock \emph{IEEE Control Systems Letters}, 5\penalty0 (3):\penalty0
  995--1000, 2020.

\bibitem[Muthirayan et~al.(2022)Muthirayan, Maheshwari, Khargonekar, and
  Sastry]{muthirayan2022competing}
Deepan Muthirayan, Chinmay Maheshwari, Pramod~P Khargonekar, and Shankar
  Sastry.
\newblock Competing bandits in time varying matching markets.
\newblock \emph{arXiv preprint arXiv:2210.11692}, 2022.

\bibitem[Roth(1982)]{roth1982economics}
Alvin~E Roth.
\newblock The economics of matching: Stability and incentives.
\newblock \emph{Mathematics of operations research}, 7\penalty0 (4):\penalty0
  617--628, 1982.

\bibitem[Roth(1984)]{roth1984evolution}
Alvin~E Roth.
\newblock The evolution of the labor market for medical interns and residents:
  a case study in game theory.
\newblock \emph{Journal of political Economy}, 92\penalty0 (6):\penalty0
  991--1016, 1984.

\bibitem[Roth et~al.(2005)Roth, S{\"o}nmez, and {\"U}nver]{roth2005pairwise}
Alvin~E Roth, Tayfun S{\"o}nmez, and M~Utku {\"U}nver.
\newblock Pairwise kidney exchange.
\newblock \emph{Journal of Economic theory}, 125\penalty0 (2):\penalty0
  151--188, 2005.

\bibitem[Russac et~al.(2019)Russac, Vernade, and Capp{\'e}]{russac2019weighted}
Yoan Russac, Claire Vernade, and Olivier Capp{\'e}.
\newblock Weighted linear bandits for non-stationary environments.
\newblock \emph{Advances in Neural Information Processing Systems}, 32, 2019.

\bibitem[Sankararaman et~al.(2021)Sankararaman, Basu, and
  Sankararaman]{sankararaman2021dominate}
Abishek Sankararaman, Soumya Basu, and Karthik~Abinav Sankararaman.
\newblock Dominate or delete: Decentralized competing bandits in serial
  dictatorship.
\newblock In \emph{International Conference on Artificial Intelligence and
  Statistics}, pages 1252--1260. PMLR, 2021.

\bibitem[Zhao et~al.(2020)Zhao, Zhang, Jiang, and Zhou]{zhao2020simple}
Peng Zhao, Lijun Zhang, Yuan Jiang, and Zhi-Hua Zhou.
\newblock A simple approach for non-stationary linear bandits.
\newblock pages 746--755, 2020.

\end{thebibliography}
\appendix

\section{PROOF OF MAIN RESULTS}
\label{sec:mainresult-proof}

Note that proof of Theorem 1, follows immediately from Lemma 1 as stated in Sec 4. Therefore, in this section we provide the proof of Lemma 1.

Before presenting the proof, we introduce some notations which are crucial for the subsequent exposition. Let \(\numChange_\ell\) be the number of time steps in \(\ell\)th interval when there is a change in the underlying preference of any player. For \(k\in [\numChange_\ell]\), let \(t_k^{\ell}\) be the \(k\)th time step when there is change in preference of some player. Let \(t_0^{\ell} = t_s^\ell\).
Before presenting the proof we introduce few definitions about matching market which are crucial in the proof. 

\begin{definition}[Set of all Matchings Blocked by \((p_j,a_k,a_{k'})\)]
For a fixed preference ordering, the set of all matching in which player $p_j$ is matched to $a_{k'}$ and $(p_j,a_k)$ is a blocking pair is denoted by $B_{j,k,k'}$. 
\end{definition}

\begin{definition}[Cover of a Set of Matchings]
Let $Q$ denote be set of triplets $(p_j,a_k,a_{k'})$. For a fixed preference ordering of players and set \(S\) of matchings, we say \(Q\) is a cover of set \(S\), or \(Q\in\mathcal{C}(S)\),   if $\bigcup_{(p_j,a_k,a_{k'}) \in Q} B_{j,k,k'} \supseteq S$.
\end{definition}

\begin{lemma}[Restatement of Lemma \ref{lem: IntervalLemma}]
Suppose Assumption \ref{ass:boundedness} holds. Then, under the RCB algorithm (Algorithm \ref{alg:bl-cb-tvm}), the pessimal regret for a player $i$ between \((\ell-1)th\) restart and \(\ell\)th restart is given by
\beq 
R^i_{\ell} \leq \mathcal{O}(KL_\ell H) + \mathcal{O}\left(K\left(1 + \frac{\log(H)}{\Delta^2} \right)\right). 
\nonumber 
\eeq 
\label{lem:AR}
\end{lemma}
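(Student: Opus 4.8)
The plan is to fix an interval $\ell$ with endpoints $t_s^\ell$ and $t_e^\ell$, and to split its regret at the \emph{first} change point $t_1^\ell$ (recall $t_0^\ell=t_s^\ell$), writing
\[
R^i_\ell = \underbrace{\sum_{t=t_s^\ell}^{t_1^\ell-1}\big(\mu_{i,t}(\pessimalMatch_t(i))-\mu_{i,t}(m_t(i))\big)}_{\text{pre-change regret}} \;+\; \underbrace{\sum_{t=t_1^\ell}^{t_e^\ell}\big(\mu_{i,t}(\pessimalMatch_t(i))-\mu_{i,t}(m_t(i))\big)}_{\text{post-change regret}}.
\]
The reason for splitting exactly at $t_1^\ell$ is that the indices in Eq.~\eqref{eq:ucb} aggregate all rewards collected since the restart $S_t=t_s^\ell$; only on the window $[t_s^\ell,t_1^\ell)$ are these samples drawn i.i.d.\ from the current, as-yet-unchanged reward distributions, so only there can I invoke standard UCB concentration. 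On that window the preferences, the pessimal matching $\pessimalMatch$, and the valid partners are all fixed, which lets me import the stationary machinery of \cite{liu2020competing}.

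For the pre-change regret I would reason as in the stationary case. A round contributes positively only when player $i$ is matched to an arm strictly worse (under its true, fixed preference) than its pessimal valid partner $\pessimalMatch(i)$; call the set of such matchings $S_i$. Since any matching that is stable under the true preferences assigns $i$ an arm at least as good as $\pessimalMatch(i)$, every $m\in S_i$ is unstable and hence blocked, so $S_i$ admits a minimal cover $Q_i\in\mathcal C(S_i)$ by blocking triplets $(p_j,a_k,a_{k'})$. For a round with $m_t\in B_{j,k,k'}$, the key observation is that the arms' preferences are fixed and known, so $m_t$ being player-optimal stable for the \emph{submitted} rankings while $(p_j,a_k)$ truly blocks it forces a ranking inversion by $p_j$: by \eqref{eq:rankordering} one must have $u_{j,t}(k')\ge u_{j,t}(k)$ even though $\mu_{j,t}(k)-\mu_{j,t}(k')\ge\Delta$ by Assumption~\ref{ass:boundedness}. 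Because $\hat\mu_{j,t}(k)$ and $\hat\mu_{j,t}(k')$ concentrate around their true current means on this fresh window, a standard UCB argument bounds the number of rounds in $[t_s^\ell,t_1^\ell)$ on which the inversion can occur by $\mathcal O\!\left(\log(H)/\Delta^2\right)$, plus $\mathcal O(1)$ rounds of forced initial exploration (when an index is $\infty$ because its arm count is $0$). Summing over the triplets in $Q_i$ — a market-dependent quantity carrying the explicit factor $K$ — yields a pre-change regret of $\mathcal O\!\left(K\big(1+\log(H)/\Delta^2\big)\right)$.

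For the post-change regret I would simply invoke the boundedness in Assumption~\ref{ass:boundedness}: each summand is at most $\bar\mu=\mathcal O(1)$, and the window $[t_1^\ell,t_e^\ell]$ has length at most $H$, giving $\mathcal O(H)$. Since the existence of $t_1^\ell$ means at least one change occurred, $L_\ell\ge1$, and this crude bound is absorbed into $\mathcal O(K L_\ell H)$ (the same term also covers the degenerate case where the first change lands at $t_s^\ell$, so the careful window is empty). Adding the two pieces gives $R^i_\ell\le\mathcal O(K L_\ell H)+\mathcal O\!\left(K(1+\log(H)/\Delta^2)\right)$, as claimed; summing over the $\lfloor T/H\rfloor$ intervals with $\sum_\ell L_\ell\le L_T$ then recovers Theorem~\ref{thm:rcb}.

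I expect the main obstacle to be the pre-change step, and specifically the inversion-counting argument. Unlike a single-agent bandit, the inversion that produces an unstable match may be committed by \emph{another} player $p_j$ and is propagated to $p_i$ only through the Deferred-Acceptance clearing, so the cover abstraction of \cite{liu2020competing} and the correspondence between ``$m_t$ worse-than-pessimal for $i$'' and ``some covered triplet is active'' must be re-established carefully. The genuinely new difficulty is justifying that concentration is legitimate only on the fresh window $[t_s^\ell,t_1^\ell)$: after $t_1^\ell$ the running mean $\hat\mu_{j,t}(k)$ is biased by stale pre-change rewards, which is exactly why that portion cannot be folded into the cover analysis and must instead pay the $\mathcal O(K L_\ell H)$ price.
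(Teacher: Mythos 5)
Your proposal is correct and follows essentially the same route as the paper's own proof: both split the interval's regret at the first change point $t_1^\ell$, bound the pre-change window via the minimal-cover/UCB inversion argument of \cite{liu2020competing} (valid there precisely because the statistics were reset at $t_s^\ell$ and preferences are static until $t_1^\ell$), and absorb the post-change portion into the crude $\mathcal{O}(K L_\ell H)$ term using boundedness and $L_\ell \geq 1$. Your closing remark---that concentration breaks after $t_1^\ell$ because the running means are contaminated by stale pre-change samples---is exactly the reason the paper confines the cover analysis to its Term A.
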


\begin{proof}
We note that the regret in any interval \(\ell\) can be decomposed as follows: 
\begin{align*}
    R_{\ell}^i &=  \mathbb{E}\left[\sum_{t=t_s^\ell}^{t_e^\ell} \left( \mu_{i,t}(\pessimalMatch_t(i)) - \mu_{i,t}(m_t(i)) \right)\right]\\
    &=  \mathbb{E}\left[\sum_{t=t_s^\ell}^{t_e^\ell} \sum_{k=1}^{K}\mathbb{I}(m_t(i)=k) \left( \mu_{i,t}(\pessimalMatch_t(i)) - \mu_{i,t}(k) \right) \right]\\
    &\leq  \mathbb{E}\left[\sum_{t=t_s^\ell}^{t_e^\ell} \sum_{k=1}^{K}\mathbb{I}(m_t(i)=k) | \mu_{i,t}(\pessimalMatch_t(i)) - \mu_{i,t}(k) |\right] \\
    &\leq \mathbb{E}\left[ 2\bar{\mu}\sum_{k=1}^{K}\sum_{t=t_s^\ell}^{t_e^\ell}\mathbb{I}(m_t(i)=k, m_t \text{is unstable})\right] \\ 
    &\leq  \mathbb{E}\left[2\bar{\mu}\sum_{k=1}^{K}\sum_{q=0}^{\Gamma_\ell-1} \sum_{t=t_q^\ell}^{t_{q+1}^{\ell}-1}\mathbb{I}(m_t(i)=k, m_t \text{is unstable})\right] \\ 
    &\leq \mathbb{E}\left[ 2\bar{\mu} \left(\underbrace{ \sum_{k=1}^{K}\sum_{t=t_{0}^\ell}^{t_1^\ell-1}\mathbb{I}(m_t(i)=k, m_t \text{is unstable})}_{\text{Term A}} +  \underbrace{\sum_{k=1}^{K}\sum_{q=1}^{\Gamma_\ell-1} \sum_{t=t_q^\ell}^{t_{q+1}^{\ell}-1}\mathbb{I}(m_t(i)=k, m_t \text{is unstable}) }_{\text{Term B}}\right)\right]
\end{align*}

First we bound Term A. We note that this term can be bounded similar to analysis in \cite{liu2020competing} for the setting where the preferences are stationary. This is because for \(t\in[t_0^\ell,t_1^\ell-1]\)  the preferences do not change and at \(t=t_0^\ell\) the UCB are reintitialized by players as per Algorithm \ref{alg:bl-cb-tvm}. We provide the detailed proof here for the sake of completeness.

For any interval \(\ell\), player \(p_i\) and arm \(a_k\), let \(\mathcal{M}_{ik}^\ell\) denote the unstable matchings where player \(p_i\) is matched to arm \(a_k\) based on the preferences before the first change in the interval \(\ell\). Furthermore, for any matching \(m\) let \(T_m([\tilde{t},\tilde{t}'])\) denote the number of time steps when \(m_t= m\) for \(t\in[\tilde{t},\tilde{t}']\). Note that for any \(k\) and \(Q^\ell \in \mathcal{C}(\mathcal{M}_{ik}^{\ell})\)
\begin{align*}
   \mathbb{E}\left[ \sum_{t=t_{0}^\ell}^{t_1^\ell-1}\mathbb{I}(m_t(i)=k, m_t \text{is unstable})\right] &\leq \mathbb{E}\left[\sum_{m\in\mathcal{M}_{ik}^\ell} T_{m}([t_0^\ell,t_1^{\ell}-1]) \right]\\ 
    &\leq\min_{Q^\ell \in \mathcal{C}(\mathcal{M}_{ik}^{\ell})} \mathbb{E}\left[ \sum_{(p_j,a_s,a_{s'})\in Q^\ell} \sum_{m\in B_{j,s,s'}} T_{m}([t_0^\ell,t_1^{\ell}-1])\right] \\ 
    &\leq\mathbb{E}\left[ \sum_{(p_j,a_s,a_{s'})\in \tilde{Q}^\ell} \sum_{m\in B_{j,s,s'}} T_{m}([t_0^\ell,t_1^{\ell}-1])\right] \\ 
    &\stackrel{(a)}{\leq} \mathbb{E}\left[\sum_{(p_j,a_s,a_{s'})\in \tilde{Q}^\ell} \left(5+\frac{6 \log(H)}{\Delta_{j,s,s'}^2}\right)\right] \\ 
    &\stackrel{(b)}{\leq} \mathbb{E}\left[\sum_{(p_j,a_s,a_{s'})\in \tilde{Q}^\ell} \left(5+\frac{6 \log(H)}{\Delta^2}\right)\right] \\
    &\leq \bar{C}\left(5+\frac{6 \log(H)}{\Delta^2}\right)
 \end{align*} 
where \(\tilde{Q}^\ell \in  \arg\min_{Q^\ell \in \mathcal{C}(\mathcal{M}_{ik}^{\ell})} \mathbb{E}\left[ \sum_{(p_j,a_s,a_{s'})\in Q^\ell} \sum_{m\in B_{j,s,s'}} T_{m}([t_0^\ell,t_1^{\ell}-1])\right] \) and \(\bar{C}=\max_{\ell} |\tilde{Q}^\ell|\). Here inequality (a) follows due to the property of UCB estimates (refer \cite[Proof of Theorem 3]{liu2020competing}) and (b) follows from Assumption \ref{ass:boundedness}-(ii). Therefore, we have 
\begin{align*}
    \mathbb{E}[\text{Term A}] \leq \bar{C}K \left(5+\frac{6 \log(H)}{\Delta^2}\right)
\end{align*}

We next bound Term B. Note that by definition of \(L_\ell\) we have 
\begin{align*}
    \text{Term B} &= \sum_{k=1}^{K}\sum_{q=1}^{\Gamma_\ell-1} \sum_{t=t_q^\ell}^{t_{q+1}^{\ell}-1}\mathbb{I}(m_t(i)=k, m_t \text{is unstable}) \\
    &=  \sum_{k=1}^{K}\sum_{t=t_1^\ell}^{t_e^\ell}\mathbb{I}(m_t(i)=k, m_t \text{is unstable}) \\ 
    &\leq K L_\ell H
\end{align*}

Thus we have 
\begin{align*}
    R_i^\ell \leq 2\bar{\mu}\bar{C}K \left(5+\frac{6 \log(H)}{\Delta^2}\right) + 2\bar{\mu}KL_\ell H
\end{align*}
\end{proof}








\end{document}